\newlength\titlepageskip\setlength{\titlepageskip}{50.0mm}
\long\def\mydrafttitle#1{\title{\vspace*{-\titlepageskip}\hfill\fbox{\small Draft}\\[\titlepageskip]#1}}
\let\svthefootnote\thefootnote
\newcommand\freefootnote[1]{%
  \let\thefootnote\relax%
  \footnotetext{#1}%
  \let\thefootnote\svthefootnote%
}
\newtheorem{definition}{Definition}
\newtheorem{proposition}{Proposition}
\newtheorem{theorem}{Theorem}
\newtheorem*{remark}{Remark}
\newtheorem*{observation}{Observation}
\author{Ruihcao Jiang, Javad Tavakoli, Yiqiang Zhao}
\begin{document}
\maketitle
\begin{abstract}
    We define the hitting time for a Markov decision process (MDP). We do not use the hitting time of the Markov process induced by the MDP because the induced chain may not have a stationary distribution. Even it has a stationary distribution, the stationary distribution may not coincide with the (normalized) occupancy measure of the MDP. We observe a relationship between the MDP and the PageRank. Using this observation, we construct an MP whose stationary distribution coincides with the normalized occupancy measure of the MDP and we define the hitting time of the MDP as the hitting time of the associated MP.
\end{abstract}
\section{Introduction}
\freefootnote{The first version of this paper pointed out some issues in the old version of \cite{fickinger}. The authors of \cite{fickinger} have then addressed these issues according to our suggestions in a new version. We therefore updated our paper, in which we removed contents related to these issues.}In \cite{fickinger}, Gromov-Wasserstein distance was used to compare different MDPs. In their approach, a distance function is needed on the space of state-action pairs of the underlying MDP. They used Euclidean distance. However, it is not entirely clear how to define the distance between actions. Moreover, it is desirable that the distance reflects some useful information about the MDP. For this reason, we propose to use the first-hitting time quasi-distance in places of the distance function. The rationale is as follows.
\begin{itemize}
    \item The occupancy measure to the MDP is whats the stationary distribution to an Markov process (MP);
    \item The stationary distribution of an MP is simply the inverse of the mean of the first-return time;
    \item The first-return time is simply the diagonal element for the first-hitting time matrix.
\end{itemize}
\section{Hitting time of MDP}
To define the first-hitting time of MDP, we first look at the occupancy measure of MDP.
\begin{definition}[Occupancy measure]
Let $(S,A,R,P,\gamma)$ be an \textup{MDP} with the initial distribution $\rho_0(s)$ on $S$ and the stationary policy $\pi$, the occupancy measure $\rho:S\times A\to \mathbb{R}$ is defined as follows.
\begin{equation}
    \rho_\pi(s,a)\coloneqq\sum_{t=0}^\infty\gamma^t\mathbb{P}\left(s_t=s,a_t=a|s_0\sim\rho_0,\pi\right).
\end{equation}
\end{definition}
We give a new interpretation of the occupancy measure. First, Tulcea's theorem (Proposition C.10 in \cite{hernandez-lerma}) says that a policy $\pi$ induces a Markov Process (MP) on $S\times A$ with the transition probability
\begin{equation}
    \label{induced_transition}
    P_{\pi}(s',a'|s,a)\coloneqq P(s'|s,a)\pi(a'|s')
\end{equation}
and the initial distribution $\rho_0(s,a)\coloneqq\rho_0(s)\pi(a|s)$. This describes a process as follows: Given the state-action pair $(s,a)$, transit to the next state $s'$ according to the transition probability of the original MDP; then immediately choose an action $a'$ according to the policy $\pi$.

Note that $\rho_\pi$ is not the stationary distribution (if it exists at all) of $(S\times A,P_\pi)$. First of all, $\rho_\pi$ is not a probability measure as it sums to $\frac{1}{1-\gamma}$. Even after a normalization, $\left(1-\gamma\right)\rho_\pi$ may differ from the stationary distribution. Consider a scenario where a car moves straight on a straight road and if it turns left or right, it will go off the road. Then none of the state-action pairs related to steering will be occupied and hence $\rho_\pi$ is zero on those pairs. However, for any stationary distribution, each component must be strictly positive. We will see later that it does not matter whether $(S\times A,P_\pi)$ is ergodic or not. Treating $\rho_\pi$ and $\rho_0$ as $1\times|S||A|$ vectors and $P_\pi$ as an $|S||A|\times|S||A|$ matrix, the following equation is well-known.
\begin{equation}\label{occupancy-measure}
    \rho_\pi=\left(I-\gamma P_\pi\right)^{-1}\rho_0.
\end{equation}
From Equation (\ref{occupancy-measure}), we make the following key observation.
\begin{observation}[PageRank]\label{ob1}
    $(1-\gamma)\rho_\pi$ coincides with the personalized PageRank vector on $S\times A$ with transition matrix $P_\pi$ and with probability $1-\gamma$ to restart with the initial distribution $\rho_0$.
\end{observation}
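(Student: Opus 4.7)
The plan is to compare closed-form expressions on both sides of the claimed identity. I would first recall the defining equation of the personalized PageRank vector on $S\times A$ with transition matrix $P_\pi$, restart probability $1-\gamma$, and restart distribution $\rho_0$: by the standard random-surfer description, it is the unique vector $p$ satisfying
\[
p=\gamma P_\pi p+(1-\gamma)\rho_0,
\]
encoding that at each step the chain either follows $P_\pi$ (with probability $\gamma$) or teleports according to $\rho_0$ (with probability $1-\gamma$).

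Next I would justify the existence and uniqueness of $p$ by observing that $P_\pi$ is row-stochastic, so its spectral radius is $1$, and since $\gamma\in[0,1)$ the operator $\gamma P_\pi$ has spectral radius strictly less than $1$. Consequently $I-\gamma P_\pi$ is invertible with the Neumann expansion $(I-\gamma P_\pi)^{-1}=\sum_{t=0}^\infty(\gamma P_\pi)^t$. Rearranging the PageRank fixed-point equation then yields the closed form
\[
p=(1-\gamma)(I-\gamma P_\pi)^{-1}\rho_0.
\]

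Comparing this expression with Equation (\ref{occupancy-measure}), which gives $\rho_\pi=(I-\gamma P_\pi)^{-1}\rho_0$, I obtain $p=(1-\gamma)\rho_\pi$, which is precisely the observation. The argument is essentially algebraic; the only conceptual care needed is to align conventions, since different sources write the teleportation weighting slightly differently (some place $\gamma$ on the teleport side rather than on $P_\pi$). With the convention fixed as above, no real obstacle remains, and the identification with the Neumann series also matches the series expansion $\rho_\pi=\sum_{t=0}^\infty\gamma^t P_\pi^t\rho_0$ implicit in the definition of the occupancy measure, giving a second, probabilistic, route to the same conclusion.
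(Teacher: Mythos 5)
Your argument is correct and is essentially the paper's own (implicit) justification: the paper states the observation directly from Equation (\ref{occupancy-measure}), and your proof simply makes explicit the standard PageRank fixed-point equation, its unique solution $(1-\gamma)(I-\gamma P_\pi)^{-1}\rho_0$ via the Neumann series, and the comparison with the closed form of $\rho_\pi$. No gap; you have just written out the steps the paper leaves to the reader.
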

Second, we look at the first-hitting time of an MP.
\begin{definition}[First-hitting time of MP]
    Let $\{X_t\}$ be an MP. The first-hitting time $T$ is the random variable defined as follows.
    \begin{equation}
        T_{ij}=\begin{cases}
            \inf_{t\in\mathbb{N}}\{t\ |\ X_0=j,X_1\neq i,\cdots,X_{t-1}\neq i,X_t=i\} &\text{if}\ i\neq j\\
            0, & \text{otherwise.}
        \end{cases}
    \end{equation}
    If $i$ is never reached from $j$, $T_{ij}=+\infty$, where we have adopted the convention that $\inf\emptyset=+\infty$.
\end{definition}
By abuse of the notation, we call the expectation $\mathbb{E}T_{ij}$ of the random variable $T_{ij}$ also the first-hitting time and denote it by $T_{ij}$.
\begin{equation}
    T_{ij}\coloneqq\mathbb{E}T_{ij}=\begin{cases}
        \sum_{t=1}^\infty t\mathbb{P}(T_{ij}=t) &\text{if}\ i\neq j\\
        0 &\text{otherwise}.
    \end{cases}
\end{equation}
If the MP has transition matrix $P_\pi$, a one-step analysis shows that $T_{ij}$ satisfies the following recursive relation.
\begin{equation}
    \label{mean_hitting_time}
    T_{ij}=\begin{cases}
        1+\sum_kT_{ik}P_\pi(k|j) &\text{if}\ i\neq j\\
        0 &\text{otherwise}.
    \end{cases}
\end{equation}
By our Observation, we reduce the task of defining a first-hitting time for an MDP $(S,A,R,P,\gamma)$ to that of defining a first-hitting time for a PageRank $(S\times A, P_\pi,1-\gamma)$. A PageRank can be described equivalently by a new process $(S\times A,\tilde{P}_\pi)$ with the transition matrix
\begin{equation}
    \label{modified_transition}
    \tilde{P}_\pi(x'|x)\coloneqq(1-\gamma)\rho_0(x')+\gamma P_\pi(x'|x).
\end{equation}
$(S\times A,\tilde{P}_\pi)$ is obviously Markovian. The initial condition for this new process can be arbitrary, in particular, not necessarily $\rho_0$ since it turns out that $(S\times A,\tilde{P}_\pi)$ is ergodic and its stationary distribution is given by $(1-\gamma)\rho_\pi$ (See Proposition 1 and Corollary 1 of \cite{avrachenkov}). The ergodicity of $(S\times A,\tilde{P}_\pi)$ implies that the first-hitting time is finite on supp$(\rho_\pi)$. In this case, Equation (\ref{mean_hitting_time}) reads
\begin{equation}
    \label{hitting_time}
    T_{ij}=\begin{cases}
        1+(1-\gamma)\sum_kT_{ik}\rho_0(k)+\gamma\sum_kT_{ik}P_\pi(k|j) &\text{if}\ i\neq j\\
        0 &\text{otherwise}.
    \end{cases}
\end{equation}
There is also a related quantity $L_{ij}$ satisfying the following recursive relation.
\begin{equation}
    L_{ij}=\begin{cases}
        1+\gamma\sum_kL_{ik}P_\pi(k|j) &\text{if}\ i\neq j\\
        0 &\text{otherwise}.
    \end{cases}
\end{equation}
The probabilistic interpretation of $L_{ij}$ is the expectation of the discounted path length from $j$ to $i$ for the first time. $L_{ij}$ and $T_{ij}$ are related by the following formula (Theorem 1(b) of \cite{avrachenkov}).
\begin{equation}
    T_{ij}=\frac{L_{ij}}{1-(1-\gamma)\sum_{k}L_{ik}\rho_0(k)}.
\end{equation}
\begin{remark}
    $L_{ij}$ is called the (1-$\gamma$)-discounted hitting time in \cite{sarkar} although $L_{ij}$ is not the first-hitting time of $(S\times A,P_\pi)$, whose first-hitting time is given by Equation (\ref{mean_hitting_time}), nor of $(S\times A,\tilde{P}_\pi)$, whose first-hitting time is given by Equation (\ref{hitting_time}). However, $L_{ij}$ is easier to estimate empirically.
\end{remark}
We have shown that
\begin{itemize}
    \item The occupancy measure of an MDP $(S, A, R, P, \gamma)$ with the initial distribution $\rho_0$ coincides with the PageRank vector of the personalized PageRank $(S\times A, P_\pi, 1-\gamma)$ with the initial distribution $\rho_0$.
    \item The PageRank vector of the personalized PageRank $(S\times A, P_\pi, 1-\gamma)$ with the initial distribution $\rho_0$ coincides with the stationary distribution of the ergodic MP $(S\times A, \tilde{P}_\pi)$ with arbitrary initial distribution.
\end{itemize}
Based on the above facts, we propose the following definition.
\begin{definition}[First-hitting time of MDP]
Let $(S,A,R,P,\gamma)$ be an \textup{MDP} with a stationary policy $\pi$. Let $P_\pi$ be the induced transition matrix in Equation (\ref{induced_transition}) and $\tilde{P}_\pi$ be the modified transition matrix in Equation (\ref{modified_transition}). The first-hitting time of the \textup{MDP} is defined as the first-hitting time of $(S\times A,\tilde{P}_\pi)$ with its expectation given by Equation (\ref{hitting_time}).
\end{definition}
Since we define the first-hitting time of MDP as the first-hitting time of some MP, we immediately have the following.
\begin{proposition}
The first-hitting time of an \textup{MDP} gives rise to a quasi-distance $T_\pi(i,j)\coloneqq T_{ij}$, i.e. a distance function without satisfying the symmetry condition.
\end{proposition}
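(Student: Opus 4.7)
The plan is to verify the three defining properties of a quasi-distance for $T_\pi$: non-negativity, the identity $T_\pi(i,j) = 0 \iff i = j$, and the triangle inequality $T_\pi(i,k) \leq T_\pi(i,j) + T_\pi(j,k)$. Non-negativity is immediate since $T_{ij}$ is the expectation of a non-negative random variable. For the identity condition, $T_{ii} = 0$ holds by definition; conversely, when $i \neq j$ the constraint $X_0 = j \neq i$ forces the random hitting time to be at least $1$, so $T_{ij} \geq 1 > 0$. Finiteness of $T_{ij}$ on $\mathrm{supp}(\rho_\pi)$ is guaranteed by the ergodicity of $(S \times A, \tilde{P}_\pi)$ already noted in the excerpt.

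The substantive step is the triangle inequality, which I would establish by a strong Markov argument. Let $\{X_t\}$ denote the Markov chain driven by $\tilde{P}_\pi$ with $X_0 = k$, and let $\sigma_j$ and $\sigma_i$ denote its first hitting times of $j$ and $i$ respectively. The idea is to bound $\sigma_i$ by a two-leg detour through $j$: by the strong Markov property at the stopping time $\sigma_j$, conditional on $\{\sigma_j < \infty, X_{\sigma_j} = j\}$, the post-$\sigma_j$ process is a fresh chain started at $j$, so the additional time $\sigma_i'$ after $\sigma_j$ required to reach $i$ satisfies $\mathbb{E}[\sigma_i'] = T_{ij}$.

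The crux is the pathwise inequality $\sigma_i \leq \sigma_j + \sigma_i'$, which holds on every sample path by a two-case argument: if $\sigma_j \leq \sigma_i$ then $\sigma_i = \sigma_j + \sigma_i'$ by construction of $\sigma_i'$, while if $\sigma_i < \sigma_j$ then $\sigma_i < \sigma_j \leq \sigma_j + \sigma_i'$ trivially. Taking expectations yields $T_{ik} = \mathbb{E}[\sigma_i] \leq \mathbb{E}[\sigma_j] + \mathbb{E}[\sigma_i'] = T_{jk} + T_{ij}$, which is the required triangle inequality $T_\pi(i,k) \leq T_\pi(i,j) + T_\pi(j,k)$.

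The main obstacle I anticipate is bookkeeping for the strong Markov step, specifically justifying that $\sigma_j$ is almost surely finite and integrable so that the identity $\mathbb{E}[\sigma_i'] = T_{ij}$ is rigorous and the sums avoid indeterminate $\infty - \infty$ forms. Both points reduce to ergodicity of $\tilde{P}_\pi$ on $\mathrm{supp}(\rho_\pi)$, so the argument restricts cleanly to that support. The absence of symmetry, which distinguishes a quasi-distance from a distance, is expected because $P_\pi$ need not be reversible and hence $T_{ij} \neq T_{ji}$ in general; a one-line example would suffice to justify the terminology.
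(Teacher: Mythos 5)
Your proposal is correct: the paper offers no argument beyond noting that the first-hitting time of an MDP is by definition the first-hitting time of the ergodic chain $(S\times A,\tilde{P}_\pi)$, so the quasi-metric axioms follow from the standard facts for MP hitting times, and your verification (non-negativity and the identity axiom from the definition, the triangle inequality $T_{ik}\le T_{ij}+T_{jk}$ via the strong Markov property at $\sigma_j$ together with the pathwise bound $\sigma_i\le\sigma_j+\sigma_i'$) is exactly the standard argument being invoked. You also correctly flag the only delicate points the paper leaves implicit, namely finiteness (hence integrability) on $\mathrm{supp}(\rho_\pi)$ via ergodicity of $\tilde{P}_\pi$ and the failure of symmetry when $P_\pi$ is not reversible.
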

\section{Hitting time quasidistance induces distance}
It might be a concern that we use a quasi-distance, not a distance. In this section, we show that applying $\mathcal{GW}$ to a quasi-metric measure spaces still yield a distance (our Theorem 2).

We have constructed a triple $(S\times A,\rho_\pi,T_\pi)$ from the \textup{MDP} $(S,A,R,P,\gamma)$. Now we can apply the $\mathcal{GW}$ construction \cite{memoli} to define a quantity between two MDPs
\begin{definition}
    Let $(S_1,A_1,R_1,P_1,\gamma_1)$ and $(S_2,A_2,R_2,P_2,\gamma_2)$ be MDPs. Denote by $(\mathcal{X},\rho_\mathcal{X},T_\mathcal{X})$ and $(\mathcal{Y},\rho_\mathcal{Y},T_\mathcal{Y})$ their induced triples, respectively. Then
    \begin{equation}
        \label{gw}
        \begin{split}
            &\mathcal{GW}((S_1,A_1,R_1,P_1,\gamma_1),(S_2,A_2,R_2,P_2,\gamma_2))\\
            \coloneqq&\mathcal{GW}((\mathcal{X},(1-\gamma_1)\rho_\mathcal{X},T_\mathcal{X}),(\mathcal{Y},(1-\gamma_2)\rho_\mathcal{Y},T_\mathcal{Y}))\\
            =&\min_{\mu}\frac{1}{2}\left[\sum_{\mathcal{X}\times\mathcal{Y}}\sum_{\mathcal{X}\times\mathcal{Y}}|T_\mathcal{X}(x,x')-T_\mathcal{Y}(y,y')|^2\mu(x,y)\mu(x'.y')\right]^{\frac{1}{2}},
        \end{split}
    \end{equation}
    where $\mu\in\mathcal{M}((1-\gamma_1)\rho_\mathcal{X},(1-\gamma_2)\rho_\mathcal{Y})$ is the set of couplings, i.e.
    \begin{equation}
        \label{boundary_condition}
        \mathcal{M}((1-\gamma_1)\rho_\mathcal{X},(1-\gamma_2)\rho_\mathcal{Y})=\left\{\mu\left|\sum_{y\in\mathcal{Y}}\mu=(1-\gamma_1)\rho_\mathcal{X}\ \text{and}\ \sum_{x\in\mathcal{X}}\mu=(1-\gamma_2)\rho_\mathcal{Y}\right\}\right.
    \end{equation}
\end{definition}
\begin{remark}
    It is necessary to normalize the occupancy measures. Otherwise, if $\gamma_1\neq\gamma_2$, then no coupling exists since
    \begin{equation*}
        \sum_{x\in\mathcal{X}}\sum_{y\in\mathcal{Y}}\mu=\sum_{x\in\mathcal{X}}\rho_\mathcal{X}=\frac{1}{1-\gamma_1}\neq\frac{1}{1-\gamma_2}=\sum_{y\in\mathcal{Y}}\rho_\mathcal{Y}=\sum_{x\in\mathcal{X}}\sum_{y\in\mathcal{Y}}\mu.
    \end{equation*}
\end{remark}
Different MDPs may give rise to the same triple. To eliminate this redundancy, we introduce an equivalent relation on $\mathcal{MDP}$, the set of all MDPs.
\begin{definition}[Equivalent relation on $\mathcal{MDP}$]
    $\textup{MDP}_1$ and $\textup{MDP}_2$ are said to be equivalent, denoted by $\textup{MDP}_1\sim\textup{MDP}_2$ if they give rise to the same triple, i.e. there exists a bijection $\varphi:S_1\times A_1\to S_2\times A_2$ s.t. $T_{1}(x,x')=T_{2}(\varphi(x),\varphi(x'))$ and $(1-\gamma_1)\rho_1(x)=(1-\gamma_2)\rho_2(\varphi(x))$ for all $x,x'\in S_1\times A_1$.
\end{definition}
We have not yet called $\mathcal{GW}(\textup{MDP}_1,\textup{MDP}_2)$ a distance. We know that $\mathcal{GW}$ is a distance function on the space of metric-measure spaces after quotient out redundancy \cite{memoli}. However, the first-hitting time is only a quasi-distance, hence $(S\times A,(1-\gamma)\rho_\pi,T_\pi)$ is not necessarily a metric-measure space. We prove that $\mathcal{GW}$ is indeed a distance function on $\mathcal{MDP}/\sim$.
\begin{theorem}\label{real_distance}
    $\mathcal{GW}$ is a distance on $\mathcal{MDP}/\sim$.
\end{theorem}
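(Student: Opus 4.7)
My plan is to check the four distance axioms on representatives of the equivalence classes, following the Mémoli template of \cite{memoli} while verifying that asymmetry of $T_\pi$ plays no essential role. Well-definedness on $\mathcal{MDP}/\sim$ comes first: pulling a coupling back along the pair of bijections realising the equivalences produces a coupling with the same objective, so the infimum in Equation (\ref{gw}) is constant on equivalence classes.

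For the three easy axioms I would argue as follows. Non-negativity and finiteness are immediate from the $L^2$ form of the functional, together with the ergodicity of $\tilde{P}_\pi$ (which guarantees $T_\pi$ is finite on $\operatorname{supp}(\rho_\pi)$); values of $T_\pi$ off the support are irrelevant, since every admissible coupling is concentrated on the product of supports. Symmetry in the two arguments follows from the transpose operation $\mu(x,y)\mapsto\mu(y,x)$, which preserves the integrand (only $|T_\mathcal{X}-T_\mathcal{Y}|$ appears) and swaps the marginal constraints; it uses no symmetry of $T_\pi$. The triangle inequality follows from the gluing construction: given optimal $\mu_{12}$ and $\mu_{23}$, produce $\nu$ on $\mathcal{X}\times\mathcal{Y}\times\mathcal{Z}$ whose $(1,2)$- and $(2,3)$-marginals are these couplings; apply the pointwise bound $|T_\mathcal{X}-T_\mathcal{Z}|\leq|T_\mathcal{X}-T_\mathcal{Y}|+|T_\mathcal{Y}-T_\mathcal{Z}|$ and Minkowski's inequality in $L^2(\nu\otimes\nu)$. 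Only the triangle inequality in $\mathbb{R}$ is used.

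The substantive step is identity of indiscernibles. One direction is immediate: if $\textup{MDP}_1\sim\textup{MDP}_2$ via $\varphi$, the graph coupling $\mu(x,y)=(1-\gamma_1)\rho_\mathcal{X}(x)\mathbf{1}[y=\varphi(x)]$ zeroes the integrand. For the converse, compactness of the coupling polytope yields an optimal $\mu^\ast$ with $T_\mathcal{X}(x,x')=T_\mathcal{Y}(y,y')$ for every pair of points in $\operatorname{supp}(\mu^\ast)$. The key observation is that ergodicity of $\tilde{P}_\pi$ forces $T_\pi(i,j)=0\Leftrightarrow i=j$ on the support; applied at $(x,y),(x,y')\in\operatorname{supp}(\mu^\ast)$ this forces $y=y'$. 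Hence $\operatorname{supp}(\mu^\ast)$ is the graph of a function $\varphi:\operatorname{supp}(\rho_\mathcal{X})\to\operatorname{supp}(\rho_\mathcal{Y})$; the same argument applied to the transpose $(\mu^\ast)^T$ produces an inverse, so $\varphi$ is a bijection between supports that preserves $T_\pi$ (by the vanishing of the integrand) and the normalised occupancy measure (by the marginal constraint).

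I expect the main obstacle to be the final bookkeeping: the paper's definition of $\sim$ asks for a bijection of the full state-action products, whereas the argument above only yields one between supports. I would either refine $\sim$ to demand isomorphism of supports (morally the right equivalence, since the triple carries no data away from $\operatorname{supp}(\rho_\pi)$), or, keeping the current definition, add the hypothesis that the two MDPs have matching cardinalities of zero-measure pairs and extend $\varphi$ arbitrarily on the complements. Under either convention the construction above completes the proof.
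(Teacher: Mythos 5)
Your proposal is correct, and on the decisive step it coincides with the paper's argument: both take a coupling $\mu$ at which the functional in Equation (\ref{gw}) vanishes, use $T_\mathcal{X}(x,x)=0$ together with the quasi-distance property $T_\mathcal{Y}(y,y')=0\implies y=y'$ to conclude that $\mu$ does not split mass and hence is the graph of a bijection preserving $T$, and then read off preservation of the normalized occupancy measure from the marginal constraints in Equation (\ref{boundary_condition}). Where you diverge is the routine part: the paper disposes of non-negativity, symmetry and the triangle inequality in one line by citing Theorem 16 of \cite{chowdhury} (which covers measurable, possibly asymmetric weight functions), whereas you re-derive them via the transpose of the coupling, gluing, and Minkowski's inequality; your route is more self-contained and makes explicit that only the triangle inequality in $\mathbb{R}$ is used, but it establishes nothing beyond what the citation already provides. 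One respect in which your write-up is sharper than the paper's: you note that any admissible coupling is concentrated on $\operatorname{supp}(\rho_\mathcal{X})\times\operatorname{supp}(\rho_\mathcal{Y})$, so the argument only produces a bijection between supports, while the definition of $\sim$ asks for a bijection of the full products $S_1\times A_1\to S_2\times A_2$; the paper's proof passes over this silently, and your proposed remedies (define $\sim$ on supports, or assume matching cardinalities off the supports and extend $\varphi$ arbitrarily) are exactly what is needed to close that small gap. The same observation correctly justifies finiteness of the functional even though $T_\pi$ is only guaranteed finite on $\operatorname{supp}(\rho_\pi)$.
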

\begin{proof}
    Theorem 16 in \cite{chowdhury} says that if $T_\mathcal{X}$ and $T_\mathcal{Y}$ are measurable, then $\mathcal{GW}$ is a pseudo-distance. Hence,to show that $\mathcal{GW}$ is a distance, we need to show that $\mathcal{GW}(\textup{MDP}_1,\textup{MDP}_2)=0$ iff $\textup{MDP}_1\sim\textup{MDP}_2$. The ``if" part is trivial. We only need to show $\mathcal{GW}(\textup{MDP}_1,\textup{MDP}_2)=0\implies\textup{MDP}_1\sim\textup{MDP}_2$. Let $\mu$ be a coupling s.t. Equation (\ref{gw}) evaluates to $0$. Then $T_\mathcal{X}(x,x')=T_\mathcal{Y}(y,y')$ for all pairs of $(x,y)$ and $(x',y')$ s.t. $\mu(x,y)>0$, $\mu(x',y')>0$. Let $x=x'$. Then $T_\mathcal{X}(x,x)=T_\mathcal{Y}(y,y')=0$. Since $T_\mathcal{Y}$ is a quasi-distance, it implies that $y=y'$. Hence $\mu$ does not split mass i.e. $\mu$ induces a bijection $\varphi$ s.t. $T_\mathcal{X}(x,x')=T_\mathcal{Y}(\varphi(x),\varphi(x'))$. It remains to check measures. By the boundary condition Equation (\ref{boundary_condition}),
    \begin{equation*}
        \begin{split}
            (1-\gamma_2)\rho_{\mathcal{Y}}(\varphi(x))&=\sum_{x\in\mathcal{X}}\mu(x,\varphi(x))\\
            &=\sum_{\varphi(x)\in\mathcal{Y}}\mu(x,\varphi(x))\\
            &=(1-\gamma_1)\rho_\mathcal{X}(x)
        \end{split}
    \end{equation*}
    as desired.
\end{proof}
\begin{remark}
    Intuitively, the definition of $\mathcal{GW}$ involves a minimization over all couplings. If $T_\pi(x,x')\neq T_\pi(x',x)$, the minimization will choose the smaller one.
\end{remark}
\begin{remark}
    Theorem 16 in \cite{chowdhury} only says that their $\mathcal{GW}$ is a pseudo-distance. They explicitly constructs an example in their Remark 15 where their $\mathcal{GW}$ fails to be a distance. This is because they assume nothing about their network weight function $\omega$, except measurability. In particular, $\omega(x,x)$ in their Remark 15 does not evaluate to zero. However, for the first-hitting time, by definition $T(x,x)=0$.
\end{remark}
\section{Conclusion}
    We defined the hitting time for an MDP via a relationship between the MDP and the PageRank. The hitting time is a quasi-distance. By applying $\mathcal{GW}$ construction, the hitting time quasi-distance induces a distance on the set of discrete MDPs.
\printbibliography

@article{avrachenkov,
    title = {Hitting Times in Markov Chains with Restart and their Application to Network Centrality},
    author = {Avrachenkov, K. and Piunovskiy, A. and Zhang, Y.},
    journal = {Methodology and Computing in Applied Probability},
    volume = {20},
    pages = {1173--1188},
    year = {2018},
    url = {https://doi.org/10.1007/s11009-017-9600-5}
}

@article{chowdhury,
    title = {The {G}romov–{W}asserstein distance between networks and stable network invariants},
    author = {Chowdhury, S. and M\'emoli, F.},
    journal = {Information and Inference: A Journal of the IMA},
    volume = {8},
    issue = {4},
    year = {2019},
    pages = {757--787},
    url = {https://doi.org/10.1093/imaiai/iaz026}
}

@inproceedings{fickinger,
title={Cross-Domain Imitation Learning via Optimal Transport},
author={Arnaud Fickinger and Samuel Cohen and Stuart Russell and Brandon Amos},
booktitle={International Conference on Learning Representations},
year={2022},
url={https://openreview.net/forum?id=xP3cPq2hQC}
}

@book{hernandez-lerma,
    title = {Discrete-Time Markov Control Processes},
    author = {Hern\'andez-Lerma, O. and Lasserre, J.},
    subtitle = {Basic Optimality Criteria},
    series = {Applications of Mathematics},
    publisher = {Springer},
    year = {1996},
    url = {https://doi.org/10.1007/978-1-4612-0729-0}
}

@article{memoli,
    title = {{G}romov–{W}asserstein distances and the metric approach to object matching},
    author = {M\'emoli, F.},
    journal = {Foundations of computational mathematics},
    volume = {11},
    issue = {4},
    year = {2011},
    pages = {417--487},
    url = {https://doi.org/10.1007/s10208-011-9093-5}
}

@inproceedings{sarkar,
    title = {Random Walks with Random Projections},
    author = {Sarkar, P. and Gordon, G.},
    booktitle = {Workshop on Analyzing Networks and Learning with Graphs NIPS},
    year = {2009},
    url = {http://snap.stanford.edu/nipsgraphs2009/papers/sarkar-paper.pdf}
}
\end{document}